\theoremstyle{plain}
\newtheorem{definition}{Definition}
\begin{document}
  \title{\bf Decision Tree Embedding by Leaf-Means}
  \author{Cencheng Shen, Yuexiao Dong, Carey E. Priebe
  \IEEEcompsocitemizethanks{
  \IEEEcompsocthanksitem Cencheng Shen is 
  with the Department of Applied Economics and Statistics, University of Delaware,
  E-mail: shenc@udel.edu \protect
   \IEEEcompsocthanksitem Yuexiao Dong is with the Department of Statistics, Operations, and Data Science, Temple University. E-mail: ydong@temple.edu \protect
  \IEEEcompsocthanksitem Carey E. Priebe
is with the Department of Applied Mathematics and Statistics (AMS), the Center for Imaging Science (CIS), and the Mathematical Institute for Data Science (MINDS), Johns Hopkins University. E-mail: cep@jhu.edu \protect
}
}

\IEEEtitleabstractindextext{%
\begin{abstract}
Decision trees and random forest remain highly competitive for classification on medium-sized, standard datasets due to their robustness, minimal preprocessing requirements, and interpretability. However, a single tree suffers from high estimation variance, while large ensembles reduce this variance at the cost of substantial computational overhead and diminished interpretability. In this paper, we propose Decision Tree Embedding (DTE), a fast and effective method that leverages the leaf partitions of a trained classification tree to construct an interpretable feature representation. By using the sample means within each leaf region as anchor points, DTE maps inputs into an embedding space defined by the tree’s partition structure, effectively circumventing the high variance inherent in decision-tree splitting rules. We further introduce an ensemble extension based on additional bootstrap trees, and pair the resulting embedding with linear discriminant analysis for classification. We establish several population-level theoretical properties of DTE, including its preservation of conditional density under mild conditions and a characterization of the resulting classification error. Empirical studies on synthetic and real datasets demonstrate that DTE strikes a strong balance between accuracy and computational efficiency, outperforming or matching random forest and shallow neural networks while requiring only a fraction of their training time in most cases. Overall, the proposed DTE method can be viewed either as a scalable decision tree classifier that improves upon standard split rules, or as a neural network model whose weights are learned from tree-derived anchor points, achieving an intriguing integration of both paradigms.
\end{abstract}

\begin{IEEEkeywords}
Decision tree, anchor points, random forest, neural network, supervised learning
\end{IEEEkeywords}}

\maketitle
\IEEEdisplaynontitleabstractindextext
\IEEEpeerreviewmaketitle

\IEEEraisesectionheading{\section{Introduction}}

Decision trees are among the most interpretable and foundational models in supervised machine learning. A decision tree recursively partitions the feature space using axis-aligned splits to minimize an impurity measure, such as Gini impurity or mean squared error, producing a hierarchy of rules that map inputs to class labels or responses \cite{quinlan1993c45,breiman1984cart}. Despite their intuitive appeal and interpretability, a single decision tree often suffer from high variance and overfitting.

An ensemble of decision trees, most notably random forest (RF) \cite{breiman1996bagging, breiman2001random}, constructs a collection of trees, each trained on a bootstrap sample of the data and using a random subset of features at each split. The final prediction is obtained by averaging (for regression) or majority voting (for classification) across all trees. This ensemble strategy substantially reduces variance while maintaining low bias, yielding a strong, general-purpose learner that often achieves much better predictive accuracy than a single decision tree and performs well across a wide range of tasks. Since then, numerous works have sought to improve random forest, including approaches based on oblique splits, sparsity, or rotation-based transformations \cite{RFSparse, rodriguez2006rotation, menze2011oblique}.

On the other hand, neural networks (NN) have been the foundational component behind recent progress on complex data regimes such as text, images, and videos. They are composed of multiple layers of affine transformations and nonlinear activations, often supplemented with techniques such as dropout and skip connections \cite{srivastava2014dropout, he2016deep} to facilitate effective training. Early work established that feed-forward networks are universal function approximators \cite{hornik1991approximation}. With the advent of efficient backpropagation and GPU computing, deep neural networks have become highly scalable to large datasets far beyond the capacity of many traditional machine learning methods, and this scalability has driven state-of-the-art results across vision, speech, and language domains \cite{lecun2015deep,goodfellow2016deep}: Convolutional neural networks emerged as the dominant architecture for image processing tasks \cite{lecun1998gradient}, while more recent transformer-based architectures employing self-attention mechanisms \cite{vaswani2017attention} have revolutionized sequence modeling and natural language processing. Deep learning thus represents the prevailing framework in modern machine learning, providing significant advantages in scalability to big data and in inference speed compared to other machine learning families.

Nevertheless, recent empirical studies have shown that random forest and other tree-based ensemble methods outperform neural networks in many standard data settings \cite{caruana2006empirical,fernandez2014we,grinsztajn2022why}, particularly those involving mixtures of numerical and categorical variables with heterogeneous scales. Random forest require minimal preprocessing, and often work well out of the box without extensive effort devoted to complex architectures, hyperparameter tuning, or regularization. As a result, they tend to generalize better on moderate-sized standard datasets, where deep networks are more susceptible to overfitting or undertraining. 

Despite its versatility, random forest suffers from significant computational bottlenecks \cite{ferreira2021computational}. First, training hundreds of trees can be computationally expensive for very large datasets or high-dimensional feature spaces. Second, when individual trees grow deep, inference can become considerably more costly. Third, the memory cost can be substantial for large ensembles, and inference latency scales with the number of trees. In addition, although a single decision tree is interpretable, the ensemble as a whole can behave like a black box.

There also exists many works exploring hybrid strategies combining random forest with other methods, such as logistic regression \cite{He2014}, as well as abundant explorations in various ways to incorporate neuron networks \cite{Kontschieder2015,wang2016rf2nn, biau2019neural,togunwa2023deep,konstantinov2023attention, qiu2024nnrf,nickzamir2025hybrid}. However, most hybrid RF–NN models remain ad hoc and application-driven, with limited theoretical understanding. Computationally, such hybrids almost always incur increased training time and memory usage, as they often combine the ensemble complexity of forest with the parameter-rich nature of neural networks. 

Given a dataset $(\mathbf{X}, \mathbf{Y})$, where $\mathbf{X} \in \mathbb{R}^{n \times p}$ represents $n$ training samples with $p$ features and $\mathbf{Y} \in \{1,\dots,K\}^n$ denotes the corresponding class labels, we propose the decision tree embedding (DTE) method based on a single classification tree. The central idea is to use the sample means of each leaf region as anchor points that transform the input into a new embedding space, yielding $\mathbf{Z} \in \mathbb{R}^{n \times m}$. Although the axis-aligned splitting rules of a single decision tree have high variance and thereby degrade inference quality, our key insight is that the sample means within the leaf regions have much lower variance by its aggregating nature. Consequently, these anchor points extracted from a single tree yield a stable and informative embedding that can be readily used for subsequent classification. The resulting formulation can be viewed either as a scalable tree-based classifier, or as a neural network model whose weights are learned from decision trees.

Section~\ref{sec:method} presents the main method, along with an ensemble extension using additional bootstrap trees, a subsequent classification step based on linear discriminant analysis, a computational complexity analysis, and qualitative comparisons with random forest and classification neural networks. Section \ref{sec:theory} discusses the population-level theoretical properties of DTE, including its preservation of conditional density under mild conditions and a characterization of the resulting classification error. Section~\ref{sec:sim} visualizes the intrinsic mechanism of DTE using synthetic data, and Section~\ref{sec:real} shows that DTE achieves an excellent balance between accuracy and running time, comparing favorably with decision trees, random forest, and shallow neural networks. 

\section{Method}
\label{sec:method}
\subsection{Decision Tree Embedding using A Single Tree}

Our main method proceeds as follows:
\begin{itemize}
\item \textbf{Input}: A labeled dataset $(\mathbf{X}, \mathbf{Y}) \in \mathbb{R}^{n \times p} \times \{1,\dots,K\}^n$.
\item \textbf{Step 1}: Train a standard classification tree that partitions the feature space $\mathbb{R}^p$ into disjoint regions $\{\mathcal{R}_1, \mathcal{R}_2, \dots, \mathcal{R}_m\}$, each corresponding to a leaf region. Let $S_j = \{i : \mathbf{X}_i \in \mathcal{R}_j\}$ denote the set of sample indices that fall into region $\mathcal{R}_j$.
\item \textbf{Step 2}: Compute the mean for each leaf region:
\begin{align*}
    \mu_j = \frac{1}{|S_j|} \sum_{i \in S_j} \mathbf{X}(i,:), \quad j = 1, \dots, m.
\end{align*}
Collect these leaf means into a matrix $\mathbf{W} = [\mu_1; \mu_2; \dots; \mu_m] \in \mathbb{R}^{m \times p}$.
\item \textbf{Step 3}: Construct the embedding $\mathbf{Z}$ as 
\begin{align*}
    \mathbf{Z} = \mathbf{X} \mathbf{W}^\top + \mathbf{1} \mathbf{b}^\top \in \mathbb{R}^{n \times m},
\end{align*}
where $\mathbf{1}$ is an $n\times 1$ vector of ones, and $\mathbf{b}$ is an $m \times 1$ intercept vector defined as $\mathbf{b}(j)=|\mathbf{W}(j,:)|^2$ for $j=1,\ldots,m$.
\item \textbf{Output}: Final embedding $\mathbf{Z}$, weight matrix $\mathbf{W}$, and intercept vector $\mathbf{b}$.
\end{itemize}
Intuitively, each column of $\mathbf{Z}$ measures the affinity of samples to a specific leaf region, determined by their inner product with the corresponding leaf mean. 
Each leaf mean serves as a data-driven anchor point summarizing a local region, and the embedding dimension equals the number of leaves. We refer to this single-tree version as $\text{DTE}_1$.

\subsection{Decision Tree Embedding with Additional Bootstrap Trees}

The single-tree embedding naturally extends to an ensemble variant, denoted $\text{DTE}_t$ for an ensemble of $t$ trees. The first tree is always trained on the original dataset, while the remaining $t-1$ trees are trained on independent bootstrap samples. Each tree $s = 1, \dots, t$ yields its own leaf-mean matrix $\mathbf{W}^{(s)}$ and intercept vector $\mathbf{b}^{(s)}$, which are concatenated to form the final embedding.

\begin{itemize}
    \item \textbf{Input}: A labeled dataset $(\mathbf{X}, \mathbf{Y})$, and a positive integer $t$.
\item \textbf{Step 1}: Compute the single-tree embedding at $s=1$:
    \begin{align*}
        [\mathbf{Z}^{(1)},\mathbf{W}^{(1)}, \mathbf{b}^{(1)}] = \text{DTE}_1(\mathbf{X}, \mathbf{Y}).
    \end{align*}
\item \textbf{Step 2}: For $s=2,\ldots,t$, bootstrap the data to obtain $(\mathbf{X}^{(s)}, \mathbf{Y}^{(s)})$, and compute
\begin{align*}
        [\mathbf{Z}^{(s)},\mathbf{W}^{(s)}, \mathbf{b}^{(s)}] = \text{DTE}_1(\mathbf{X}_{trn}^{(s)}, \mathbf{Y}_{trn}^{(s)}).
    \end{align*}
\item \textbf{Step 3}: Concatenate the results:
\begin{align*}
    \mathbf{W} = \begin{bmatrix} \mathbf{W}^{(1)} \\[4pt] \mathbf{W}^{(2)} \\[4pt] \vdots \\[4pt] \mathbf{W}^{(t)} \end{bmatrix}, 
    \quad
    \mathbf{b} = [\mathbf{b}^{(1)}, \mathbf{b}^{(2)},\ldots, \mathbf{b}^{(t)}],
\end{align*}
\begin{align*}
    \mathbf{Z}^ = [\mathbf{Z}^{(1)}, \mathbf{Z}^{(2)},\ldots, \mathbf{Z}^{(t)}],
\end{align*}
\item \textbf{Output}: Final embedding $\mathbf{Z}$, weight matrix $\mathbf{W}$, and intercept $\mathbf{b}$.
\end{itemize}
Suppose the $s$th tree has $m^{(s)}$ leaf regions. Then the dimension of $\mathbf{Z}$ from $\text{DTE}_t$ equals the total number of leaf regions across all trees, i.e., $m = \sum_{s=1}^{t} m^{(t)}$. While alternative aggregation schemes for combining multiple embeddings are possible, direct concatenation is both simple and information-preserving.

\subsection{Classification by LDA}

The decision tree embedding $\mathbf{Z}$, whether constructed from a single tree or an ensemble, can be seamlessly paired with a lightweight linear classifier such as linear discriminant analysis (LDA) for downstream prediction tasks. Given a train/test split, the procedure is as follows:

\begin{itemize}
    \item \textbf{Input}: Training data $(\mathbf{X}_{trn}, \mathbf{Y}_{trn})$, testing data $\mathbf{X}_{tsn}$, and the number of trees $t$.
    \item \textbf{Training}: Compute the training embedding:
    \begin{align*}
        [\mathbf{Z}_{trn},\mathbf{W}, \mathbf{b}] = \text{DTE}_t(\mathbf{X}_{trn}, \mathbf{Y}_{trn}).
    \end{align*}
    \item \textbf{Testing Data Embedding}: Project the test data into the embedded space:
    \begin{align*}
        \mathbf{Z}_{tsn} = \mathbf{X}_{tsn} \mathbf{W}^\top + \mathbf{1} \mathbf{b}^\top.
    \end{align*}
    \item \textbf{Testing}: Train an LDA classifier on $(\mathbf{Z}_{trn}, \mathbf{Y}_{trn})$, then apply it to $\mathbf{Z}_{tsn}$.
    \item \textbf{Output}: Predicted test labels $\hat{\mathbf{Y}}_{tsn}$.
\end{itemize}

In cross-validation, the predicted labels are compared to ground truth to compute classification accuracy. While other classifiers, such as logistic regression, shallow neural networks, or nearest-neighbor methods, can also be used atop the decision tree embedding, LDA offers an excellent balance of computational efficiency and accuracy.

\subsection{Computational Complexity}

Let $n$ be the number of samples, $p$ the feature dimension, $m$ the number of leaf regions, $t$ the number of trees, and $K$ the number of classes in the downstream classifier.
\begin{itemize}
\item Training $t$ classification trees has time complexity $O(t n p \log n)$; when done in parallel, which is feasible since DTE typically uses only a small $t$, the cost reduces to $O(n p \log n)$.
\item Computing all leaf means requires aggregating each sample once, for a total cost of $O(n p)$.
\item The matrix multiplication $\mathbf{Z} = \mathbf{X} \mathbf{W}^\top$ costs $O(n p m)$.
\item LDA costs $O(n m^2 + m^3)$ for training, and per-sample testing time is $O(m^2+K*m)$.
\end{itemize}
Since $p$, $m$, and $K$ are typically much smaller than $n$, the overall computational complexity is effectively linear in $n$ for large sample sizes. Compared with random forest, the decision tree embedding requires only a few trees (we found $t=1$ or $t=3$ to be sufficient), making it substantially faster and more memory-efficient, as most of the cost arises from tree construction.

In practice, a few additional considerations may affect efficiency. For example, if a single tree produces too many leaf regions (large $m$), the classification step by LDA can slow down. This can be mitigated by limiting the tree depth or the number of splits, although we did not apply such constraints in this paper. 

\subsection{Comparison to Tree and Forest Classification}
The DTE method requires building decision trees, but the subsequent classification step fundamentally differs from the splitting rules used in tree-based classifiers. In standard decision trees, small perturbations in the data can lead to large deviations in the resulting split. Consequently, a single tree often exhibits high variance. Random forest mitigates this by aggregating many such unstable trees, thereby reducing variance through averaging.

In contrast, DTE does not use the split rules at all for subsequent classification. Instead, it uses the sample means of the leaf regions, which has lower variance than the split rules because of sample averaging in the leaf region.

An illustrative example is the following. Suppose a random variable $X$ satisfies $X|Y=1 \in [-1,0]$, and $X|Y=2 \in [0,1]$, uniformly distributed in each respective interval. The perfect split is clearly at $0$. A decision tree using limited training samples may place the split at, say, $0.04$, which is suboptimal for testing data from the same distribution. A random forest would reduce the deviation: different bootstrap samples may produce splits around $0.02$, $-0.02$, $-0.03$, etc., and averaging these yields a split location closer to $0$ than any single tree.

Now consider the leaf means. With the perfect split at $0$, the ideal anchor points are simply the true class means, $[-0.5, 0.5]$. Using the imperfect split at $0.04$ instead yields empirical leaf means at $[-0.48, 0.52]$. Both sets of anchor points linearly separate the two classes, and the discrepancy between them is smaller than the discrepancy between split locations themselves. This illustrates why the embedding is robust against sampling noise with just a single tree.

Therefore, DTE can be viewed as a fast and efficient tree-based classifier that inherently reduces the variance arising from unstable split rules. The variance reduction comes from averaging samples within each leaf region, whereas random forest reduces variance by aggregating many diverse trees, a fundamentally different and more computationally expensive mechanism. Random forest typically requires many trees to stabilize performance, whereas DTE often achieves similar performance with just one or a few trees. In fact, using too many trees can offset the benefits of the embedding during LDA classification. As $t$ increases, the embedding dimension $m$ grows proportionally, leading to higher estimation variance and increased running time for LDA. 

\subsection{Relationship to Classification Neural Networks}
Once a decision tree is built, DTE becomes a purely linear model. In our proposed classification pipeline, DTE provides a linear embedding with an intercept, followed by LDA, which is another linear transformation with an intercept. Consequently, the overall model is equivalent to a shallow two-layer neural network: DTE serves as the hidden layer with $m$ neurons, and LDA acts as the output layer. In particular, the leaf regions and their associated anchor points can be viewed as a sufficient transformation in the sense of \cite{NNGraphSufficiency}.

Because model inference in DTE with LDA involves only basic matrix operations similar to a trained neural network, it inherently avoids the potentially expensive tree traversal process used in random forest, where inference time increases with both tree depth and the number of trees.

Therefore, DTE can be viewed as a neural network model, except that the number of neurons $m$ and the weight matrices are learned from the structure of a decision tree, rather than from random initialization followed by back-propagation. It is often the case that constructing a single classification tree is significantly faster than running gradient-based training. While this does not guarantee superiority over back-propagation, when the decision tree provides a good partition of the feature space, DTE is expected to perform favorably. 

\section{Theory}
\label{sec:theory}
While the previous sections considered the sample-based version of the method, working with sample data $\mathbf{X}$ and the embedding $\mathbf{Z}$, in this section we study the corresponding random-variable formulation. Specifically, we analyze the population-level behavior of the method in terms of random variables $X$ and $Y$. Under the classical i.i.d.\ assumption, each sample pair $(\mathbf{X}(i,:), \mathbf{Y}(i))$ is drawn independently from the joint distribution of $(X,Y)$ for $i = 1,\ldots,n$. 

Our focus is exclusively on classification, so $X$ takes values in $\mathbb{R}^{p}$ and $Y$ is a discrete label in $\{1,\ldots,K\}$. We denote their supports by $\mathcal{X}$ and $[K]$, respectively.

\subsection{Bayes-homogeneous Partition and Conditional Probability Preservation}

We first establish that the decision tree embedding preserves the conditional class distribution whenever the tree’s leaf regions are Bayes-homogeneous, defined as follows.

\begin{definition}
Given random variables $(X,Y) \in \mathcal{X} \times [K]$, a set of regions $\{\mathcal{R}_1,\ldots,\mathcal{R}_m\}$ partitioning the support $\mathcal{X}$ is said to be Bayes-homogeneous, if the conditional distribution of $Y$ given $X$ is constant within each region. That is, for every region 
$\mathcal{R}_j$ and all pairs of points $x, x' \in \mathcal{R}_j$
\[
P(Y \mid X = x) = P(Y \mid X = x')
\]
The set of regions is said to be $\epsilon$-Bayes-homogeneous if, their conditional distributions satisfy
\[
\|P(Y \mid X = x) - P(Y \mid X = x')\|_1 \le \epsilon,
\]
where $\epsilon$ is the maximum deviation across all regions.
\end{definition}
This definition is flexible enough to allow arbitrary $\epsilon$, making it universally applicable. In practice, a well-behaved decision tree is expected to yield a set of leaf regions with a small $\epsilon$. Our first theorem shows that the decision tree embedding approximately preserves the conditional class distribution up to this same bound, and the preservation becomes exact in the ideal case $\epsilon = 0$.

Note that we use the notation $P(Y \mid X)$ to denote the entire conditional distribution of $Y$ given $X$, i.e.,
\[
P(Y \mid X)
= \big(P(Y=1\mid X),\,\ldots,\,P(Y=K\mid X)\big),
\]
a vector in the probability simplex. When referring to a specific class $c$, we explicitly write $P(Y=c \mid X)$.

\begin{restatable}{theorem}{thmOne}
\label{thmOne}
Let $(X,Y)$ be a random pair with $X\in\mathbb{R}^p$ and $Y\in [K]$. Let $\{\mathcal{R}_1,\ldots,\mathcal{R}_m\}$ be the set of leaf regions from a classification tree, and let $Z=X\mathbf{W}^\top+\mathbf{1}\mathbf{b}^\top$ denote its Decision Tree Embedding, where
\[
\mu_j = \mathbb{E}[X\mid X\in\mathcal{R}_j], \quad
\mathbf{b}_j = -\|\mu_j\|^2, \quad
\mathbf{W} = [\mu_1^\top,\ldots,\mu_m^\top]^\top.
\]
If the partition $\{\mathcal{R}_j\}$ is $\varepsilon$-Bayes-homogeneous, then
\[
\|P(Y\,|\,X)-P(Y\,|\,Z)\|_1 \le \varepsilon.
\]
In particular, when $\varepsilon=0$, the DTE embedding is sufficient for $Y$; that is,
\[
P(Y\,|\,X)=P(Y\,|\,Z) \quad \text{a.s.}
\]
\end{restatable}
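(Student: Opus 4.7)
The plan is to exploit the fact that $Z = g(X)$ is a deterministic affine function of $X$, with $g(x) = \mathbf{W}x + \mathbf{b}$. Since $\sigma(Z) \subseteq \sigma(X)$, the tower property gives
\[
P(Y = c \mid Z) = \mathbb{E}\bigl[\,P(Y = c \mid X) \,\big|\, Z\,\bigr],
\]
so $P(Y \mid Z = z)$ is a convex combination of the values $P(Y \mid X = x')$ taken over $x'$ in the fiber $g^{-1}(z)$. If this combination can be confined to a single leaf, then $\varepsilon$-Bayes-homogeneity, combined with the triangle inequality and convexity of $\|\cdot\|_1$, will deliver the $\ell_1$ bound.

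The central intermediate step is to show that the leaf assignment $L(X) := j$ when $X \in \mathcal{R}_j$ is almost surely recoverable from $Z$, i.e., $L = h(Z)$ for some measurable $h$. I would use the identity
\[
Z_j = \langle X, \mu_j\rangle - \|\mu_j\|^2 = \tfrac12\|X\|^2 - \tfrac12\|X-\mu_j\|^2 - \tfrac12\|\mu_j\|^2,
\]
which shows that, after subtracting the $j$-invariant term $\tfrac12\|X\|^2$, the vector $Z$ encodes the full collection of Euclidean distances from $X$ to the leaf means $\mu_j$. Coupling this with distinctness of the means and the axis-aligned geometry of the partition, one can construct $h$ that selects the leaf consistent with $Z$ and the known quantities $\{\mu_j, \|\mu_j\|^2\}$.

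With leaf recoverability established, the bound is immediate: for $x \in \mathcal{R}_j$ and any $x' \in \mathcal{R}_j$, $\varepsilon$-Bayes-homogeneity gives $\|P(Y\mid X=x) - P(Y\mid X=x')\|_1 \le \varepsilon$, and averaging over the fiber (contained in $\mathcal{R}_j$) preserves this bound. When $\varepsilon = 0$, all within-leaf conditionals collapse to a single common value, yielding $P(Y\mid X) = P(Y\mid Z)$ a.s. The main obstacle will be the leaf-recoverability step: if $g$ identified points from distinct leaves into the same fiber, the tower average would mix $P(Y\mid X)$ across leaves with potentially unrelated $P(Y\mid X \in \mathcal{R}_j)$, breaking any bound in $\varepsilon$ alone. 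Making this rigorous likely requires both the axis-aligned structure of the partition and the distinctness of the $\mu_j$, rather than purely algebraic properties of $\mathbf{W}$.
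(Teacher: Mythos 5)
Your overall architecture parallels the paper's: write $P(Y\mid Z)$ as a conditional average of $P(Y\mid X)$ via the tower property, argue that the averaging takes place entirely within a single leaf, and then invoke $\varepsilon$-Bayes-homogeneity together with convexity of $\|\cdot\|_1$. The paper reaches the ``single leaf'' conclusion by asserting $\sigma(Z)=\sigma(\mathcal{R}(X))$ on the grounds that $Z$ is a measurable function of the leaf index; you instead try to prove the inclusion that is actually needed, namely $\sigma(\mathcal{R}(X))\subseteq\sigma(Z)$ (leaf recoverability). You have correctly isolated the crux, and correctly observed that if a fiber of $g$ straddles two leaves the bound collapses. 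But you have not closed that step, and the ingredients you propose (distinctness of the $\mu_j$ and axis-alignment of the partition) are not sufficient to close it.

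Concretely, $Z$ determines $X$ only up to translation by $\mathrm{span}(\mu_1,\ldots,\mu_m)^{\perp}$, and a fiber $x_0+\mathrm{span}(\mu_j)^{\perp}$ can meet several leaves with positive probability. Take $p=2$, the one-split partition $\mathcal{R}_1=\{x_1<0\}$, $\mathcal{R}_2=\{x_1\ge 0\}$, and the four-point distribution with mass $1/8$ at $(-2,2)$ and $3/8$ at $(-2/3,-2)$ in $\mathcal{R}_1$, mass $1/8$ at $(2,-2)$ and $3/8$ at $(2/3,2)$ in $\mathcal{R}_2$. Then $\mu_1=(-1,-1)$ and $\mu_2=(1,1)$ are distinct and the partition is axis-aligned, yet $Z$ is a function of $x_1+x_2$ alone, so $(-2,2)\in\mathcal{R}_1$ and $(2,-2)\in\mathcal{R}_2$ receive the same embedding; conditioning on that value of $Z$ mixes the two leaves, and with $Y$ equal to the leaf index (so $\varepsilon=0$) one gets $\|P(Y\mid X)-P(Y\mid Z)\|_1=1$. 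Leaf recoverability is therefore a genuine additional hypothesis --- it holds, for instance, when the $\mu_j$ span $\mathbb{R}^p$ so that $x\mapsto \mathbf{W}x$ is injective, or more generally when each fiber of $g$ is a.s.\ contained in one leaf --- and cannot be derived from the properties you list. To complete your proof you must either assume such a condition or read $Z$ as a function of the leaf assignment only. Note that the paper's own justification (``$Z$ is a measurable function of $\mathcal{R}(X)$'') is likewise not literally true of $Z=X\mathbf{W}^\top+\mathbf{1}\mathbf{b}^\top$, so the obstacle you flagged is real in the source as well; your proposal is more candid about it but does not resolve it.
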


Therefore, DTE is able to approximately preserve the conditional label probabilities. Each leaf region $\mathcal{R}_j$ defines an anchor point $\mu_j$, whose corresponding coordinate in $Z$ serves as a similarity-based weight. In the ideal case, DTE is fully sufficient, providing a lossless transformation of the underlying class-conditional structure.

\subsection{Population Classification Error Bound}

Our next theorem derives a classification error bound.

\begin{restatable}{theorem}{thmTwo}
\label{thmTwo}
Under the same notation as Theorem~\ref{thmOne}, and further assume that each point in a leaf region $\mathcal{R}_j$ is closer (in Euclidean norm) to its own region mean $\mu_j$ than to any other region mean $\mu_{j'}$.

For each class $c \in [K]$, let 
\begin{align*}
\mathcal{J}_c &= \{j:\arg\max_{c'}P(Y=c' \mid X\in\mathcal{R}_j)=c\}
\end{align*}
be the index set of leaves whose majority label is $c$. We then consider the indicator classifier on the DTE embedding:
\[
g(Z) = \arg\max_{c\in\{1,\ldots,C\}} \gamma_c^\top Z, \quad
(\gamma_c)_j = 
\begin{cases}
1, & j\in \mathcal{J}_c,\\
0, & \text{otherwise.}
\end{cases}
\]

For each $j$, let
\begin{align*}
l_j &= 1-\max_c P(Y=c \mid X\in\mathcal{R}_j)
\end{align*}
be the impurity level at each region. Then the classification error of the indicator rule $g$ equals 
\[
L_{g} = \sum_{j=1}^m P(X\in\mathcal{R}_j)\,l_j.
\]

In particular, if every leaf region is pure, we have $l_j=0$ and $L_{g}=0$, such that the DTE embedding achieves perfect classification with the indicator rule.
\end{restatable}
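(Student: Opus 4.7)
The plan is to reduce the theorem to a pointwise identification of the classifier and then combine it with the law of total probability. Fix a leaf $\mathcal{R}_j$ and let $c_j^\ast := \arg\max_c P(Y=c \mid X \in \mathcal{R}_j)$ be its majority label, so that $j \in \mathcal{J}_{c_j^\ast}$ by definition. For any $X \in \mathcal{R}_j$, the closeness hypothesis $\|X-\mu_j\|^2 < \|X-\mu_{j'}\|^2$ expands, after cancelling the common $\|X\|^2$, to $X^\top \mu_j - \tfrac12\|\mu_j\|^2 > X^\top \mu_{j'} - \tfrac12\|\mu_{j'}\|^2$, which (up to the constant absorbed in the bias $\mathbf{b}$) identifies $j$ as the unique argmax of the coordinates $Z_1,\ldots,Z_m$. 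I would then argue that this coordinate-level dominance forces $\gamma_{c_j^\ast}^\top Z > \gamma_{c'}^\top Z$ for every $c' \neq c_j^\ast$, so that $g(Z) = c_j^\ast$ on the event $\{X \in \mathcal{R}_j\}$.

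Once the pointwise identification is in place, the remainder is a standard decomposition. By the law of total probability,
\[
L_g = \sum_{j=1}^m P(X \in \mathcal{R}_j)\, P\bigl(g(Z) \neq Y \,\bigm|\, X \in \mathcal{R}_j\bigr),
\]
and on $\{X \in \mathcal{R}_j\}$ the previous step yields
\[
P\bigl(g(Z) \neq Y \,\bigm|\, X \in \mathcal{R}_j\bigr) = 1 - P(Y = c_j^\ast \mid X \in \mathcal{R}_j) = 1 - \max_c P(Y=c \mid X \in \mathcal{R}_j) = l_j.
\]
Substituting back gives the claimed identity $L_g = \sum_j P(X \in \mathcal{R}_j)\, l_j$, and the pure-leaf corollary follows immediately because $l_j = 0$ for every $j$ implies $L_g = 0$.

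The principal obstacle is the implication ``$j$ is the argmax of the coordinates of $Z$'' $\Rightarrow$ ``$c_j^\ast$ is the argmax of $\gamma_c^\top Z$'', because $\gamma_c^\top Z = \sum_{j' \in \mathcal{J}_c} Z_{j'}$ is a sum over an entire subset of leaves rather than a single maximum, and in principle a competing class $c' \neq c_j^\ast$ could amass a large total from its own leaves. Resolving this step cleanly requires either (a) interpreting the indicator rule as selecting the class whose indicator vector covers the winning coordinate, which makes $c_j^\ast$ automatic, or (b) sharpening the closeness inequality uniformly on $\mathcal{R}_j$ so that the dominance of $Z_j$ controls any competing subset sum. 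Once this geometric step is settled, everything else is routine conditioning and arithmetic.
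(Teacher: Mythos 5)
Your argument follows the same route as the paper's proof: expand the squared distances to show that the $j$th coordinate of $Z$ dominates on $\mathcal{R}_j$, conclude $g(Z)=c_j^\ast$ there, and finish with the law of total probability. The conditioning step and the pure-leaf corollary are fine. The obstacle you flag at the end, however, is not a technicality you merely failed to resolve --- it is a genuine gap, and the paper's own proof commits exactly the error you are worried about: it asserts that ``$\gamma_c^\top Z(x)$ selects the maximum among coordinates belonging to class $c$,'' but $\gamma_c^\top Z=\sum_{j'\in\mathcal{J}_c}Z_{j'}$ is a sum, not a maximum. Knowing that $Z_j$ is the largest single coordinate does not prevent a competing class from amassing a larger total over its own leaves: with three leaves, $\mathcal{J}_1=\{1\}$, $\mathcal{J}_2=\{2,3\}$, and $Z(x)=(10,9,9)$ for some $x\in\mathcal{R}_1$, one gets $\gamma_1^\top Z=10<18=\gamma_2^\top Z$, so $g(Z)=2\neq c_1^\ast$ and the claimed error formula fails. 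Your option (a) --- reading the rule as $g(Z)=\arg\max_c\max_{j\in\mathcal{J}_c}Z_j$, i.e., the class that owns the winning coordinate --- is the right repair and is evidently what the authors intended; with that definition the pointwise identification is immediate and the rest of your argument closes the proof. Option (b) would require additional assumptions on the signs and magnitudes of the coordinates and is not needed.

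A smaller point: the nearest-mean hypothesis expands to dominance of $x^\top\mu_j-\tfrac12\|\mu_j\|^2$, whereas the embedding's bias is $\pm\|\mu_j\|^2$ (the method section and Theorem~\ref{thmOne} even disagree on the sign), so $Z_j(x)-Z_{j'}(x)$ picks up an extra term $\tfrac12\bigl(\|\mu_{j'}\|^2-\|\mu_j\|^2\bigr)$ that does not vanish in general. You wave at this with ``up to the constant absorbed in the bias,'' and the paper silently drops it in its displayed identity for $Z_j(x)-Z_{j'}(x)$; a clean statement should take $\mathbf{b}_j=-\tfrac12\|\mu_j\|^2$, after which your expansion is exact and the remainder of your argument is routine.
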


Therefore, when the decision tree performs well so that the resulting leaf partition $\{\mathcal{R}_1,\ldots,\mathcal{R}_m\}$ is pure and the regions are sufficiently well-clustered, meaning each point is closer to its own leaf mean than to any other leaf mean, we have $P(Y=c \mid X\in\mathcal{R}_j)=1$ for some class $c$.  In this case, the classification induced by the DTE embedding is also perfect.

We remark that the indicator classification rule considered in Theorem~\ref{thmTwo} is not necessarily optimal; rather, it is chosen for analytical simplicity and to yield a conservative error bound. In practice, the linear discriminant analysis classifier used in the sample version of our method is more flexible and typically achieves better performance, as it leverages the empirical covariance structure of $Z$ to produce better decision boundaries in finite-sample settings.

\section{Simulation}
\label{sec:sim}
\begin{figure*}[ht]
	\centering
	\includegraphics[width=0.4\textwidth,trim={0cm 0cm 0cm 0cm},clip]{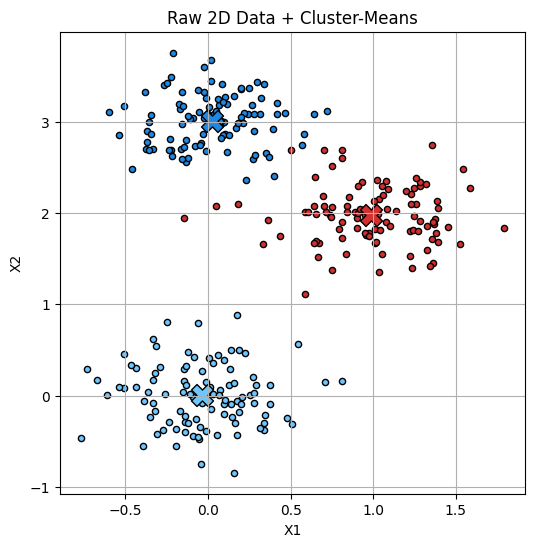}
    \includegraphics[width=0.4\textwidth,trim={0cm 0cm 0cm 0cm},clip]{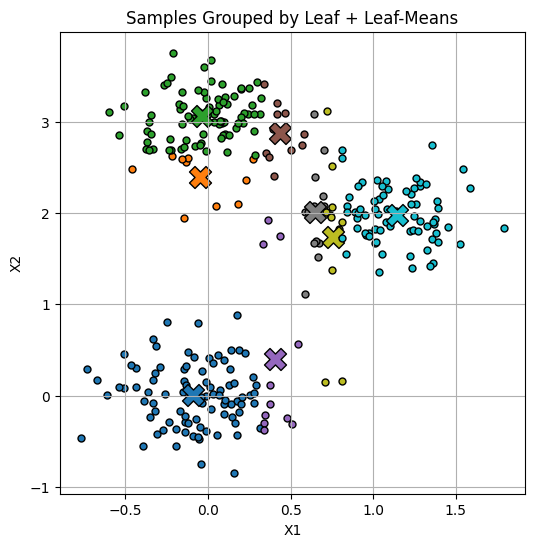}
    \includegraphics[width=0.4\textwidth,trim={0cm 0cm 0cm 0cm},clip]{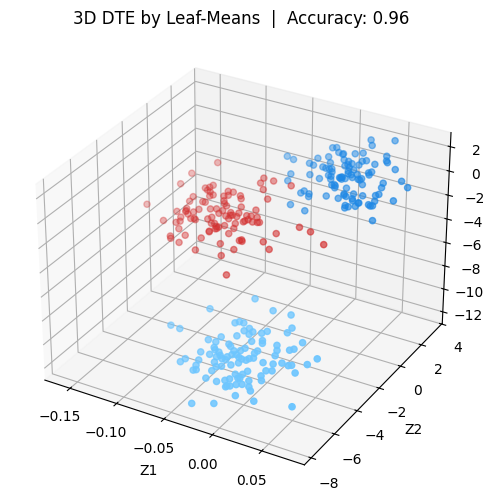}
    \includegraphics[width=0.4\textwidth,trim={0cm 0cm 0cm 0cm},clip]{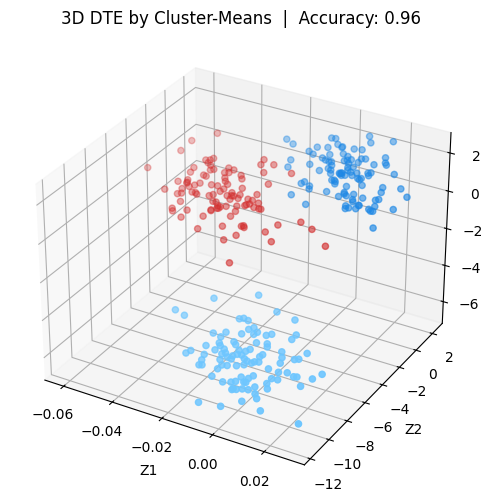}
	\caption{Top left: simulated training data in the original two-dimensional space. The two clusters of class~1 are shown in light and dark blue, respectively, while class~2 is shown in red. Empirical cluster means are indicated by bolded markers in matching colors. Top right: leaf assignments and leaf means of the fitted decision tree. Bottom left: the three-dimensional DTE embedding $\mathbf{Z}$ obtained using the tree-derived leaf means. Samples are colored by their true cluster labels. Bottom right: the oracle embedding $\mathbf{Z}^{*}$, obtained by replacing the leaf means with the true cluster means.}
	\label{fig1}
\end{figure*}

In this section, we consider a simple simulation setting to provide visual intuition for how DTE works. We generate synthetic two-dimensional data drawn from two classes with three total clusters. For class~$1$, the data are generated equally like from either of the following two distributions:
\[
X_{1a} \sim \mathcal{N}\!\left(
\begin{bmatrix}0 \\ 0\end{bmatrix}, 
0.3^2 I_2
\right),
\quad
X_{1b} \sim \mathcal{N}\!\left(
\begin{bmatrix}0 \\ 3\end{bmatrix}, 
0.3^2 I_2
\right).
\]
For class~$2$, the data are generated from a single distribution:
\[
X_{2} \sim \mathcal{N}\!\left(
\begin{bmatrix}1 \\ 2\end{bmatrix}, 
0.3^2 I_2
\right).
\]
The class priors are $\pi_1 = 2/3$ and $\pi_2 = 1/3$. This mixture model has a Bayes-optimal accuracy of approximately $0.993$.

A training set $(\mathbf{X},\mathbf{Y})$ of size $n=100$ is sampled from this model. The top left panel of Figure~\ref{fig1} visualizes the training data: light and dark blue points correspond to the two clusters in class~1, and red points correspond to class~2. The empirical cluster means are shown as well. We then fit a single classification decision tree using a minimum leaf size of 10. The top right panel of Figure~\ref{fig1} colors each sample according to its leaf assignment and marks the corresponding leaf means.

Using this fitted tree, we compute the associated decision tree embedding. The matrix $\mathbf{W}$ contains the leaf means, and the embedding $\mathbf{Z}$ is 8-dimensional because the tree has eight leaf regions. The first three coordinates of $\mathbf{Z}$ are visualized in the bottom left panel of Figure~\ref{fig1}, showing that the two classes are well separated by the resulting embedding.

In this simulation, the ideal separating regions can be viewed as the true generating clusters. Thus, we also construct a hypothetical oracle embedding $\mathbf{Z}^{*}$, obtained by applying the same algorithm but replacing the leaf means with the true cluster means, which serve as optimal anchor points. The resulting embedding is displayed in the bottom right panel of Figure~\ref{fig1}. The close visual agreement between the two bottom panels indicates that the single-tree DTE closely approximates the geometry of the oracle embedding. In particular, although the split rules themselves exhibit high variance relative to the optimal split, the resulting anchor points and the induced embedding exhibit very low variance.

To further assess separation quality, we compute the classification error using DTE with LDA on a test set of size 100 generated from the same model. We perform this both for the actual embedding $\mathbf{Z}$ and the oracle embedding $\mathbf{Z}^{*}$, respectively. Our method yields a remarkably well-separated embedding: an LDA classifier trained on $(\mathbf{Z}, \mathbf{Y})$ achieves an accuracy of $0.96$, which is identical to that obtained from $(\mathbf{Z}^{*}, \mathbf{Y})$.

This shows that the leaf means serve as effective anchor points for a geometry-preserving embedding, and that the resulting decision tree embedding yields high-quality representations even with a single tree. Although the tree-induced partition deviate significantly from the oracle cluster structure, the DTE method remains well-behaved and robust to sampling variability in the tree partitions.

\section{Real Data}
\label{sec:real}

Table~\ref{table1} compares DTE with single decision tree, random forest, and a two-layer neural network on 20 public datasets from the UCI Machine Learning Repository \cite{Kelly2023UCI}, Kaggle, and MATLAB built-ins. The datasets span diverse domains with varying sizes, from text to grayscale images to standard feature data. Categorical features are converted via one-hot encoding.

All experiments were conducted on a standard desktop (6-core Intel i7 CPU, 64GB RAM) using MATLAB 2024a, with no GPU or parallelization. We used various MATLAB built-in functions with default parameters in most cases: For DTE-1, we built a single tree using \texttt{fitctree} (numBins = 30, minLeafSize = 10) and applied LDA using \texttt{fitcdiscr} with pseudolinear discriminant type. DTE-3 uses two additional trees built on independent bootstrap samples. The baseline decision tree uses \texttt{fitctree} with default settings. Random forest use \texttt{TreeBagger} with 50 trees, and the neural network uses \texttt{fitcnet} with 100 hidden neurons and standardization enabled.

Classification errors were averaged over 10 replicates of 5-fold splits (80\% train / 20\% test). The source and size of each dataset are listed in Table~\ref{table1}. The main findings are as follows:
\begin{itemize}
\item DTE-1 vs. decision tree: DTE-1 is equal or better on 19/20 datasets, and significantly better on 16/20.
\item DTE-3 vs. DTE-1: DTE-3 matches or improves upon DTE-1 on all datasets and is significantly better on 6/20.
\item DTE-3 vs. neural network: DTE-3 is better in 13/20 cases, with 9 significant.
\item DTE-3 vs. random forest: DTE-3 is significantly better on 5 datasets, while random forest is significantly better on 6; otherwise performance is comparable.
\end{itemize}
Here, we deem a method significantly better when the difference in error rates exceeds three standard deviations, using the larger standard deviation of the two methods.

Figure~\ref{fig2} reports running times. Across all datasets, DTE-1 and the single decision tree are the fastest methods because they build only one tree. DTE-3 is roughly three times slower (since tree building is repeated without parallelization), but still substantially faster than random forest. Despite MATLAB’s optimized implementation, the 50-tree random forest is about 9× slower than DTE-1 and 4× slower than DTE-3 on median. The neural network is typically faster than the forest but slower than DTE-3 (about 3× on median), with occasional slow convergence. As noted in the complexity analysis, DTE by LDA can be slower when the number of leaf regions $m$ is too large, as seen in the FacePIE dataset, due to the quadratic complexity of LDA in $m$.

Overall, the results highlight the advantages of the proposed method. DTE-1 is consistently more accurate than a single decision tree, even though it uses the same tree. DTE-3 often yields additional gains while remaining efficient. DTE is competitive with random forest and shallow neural networks in accuracy, while offering substantially lower computation time in most cases.

\begin{table*}[ht]
\centering
\begin{tabular}{c|c||cc|ccc}
\hline \hline
Data & $(n,p,K,m)$ & DTE-$1$ & DTE-$3$ & Tree & Forest & S-NN  \\
\hline \hline
Cora \cite{mccallum2000automating} & (2708,1433,7,79) & $27.3 \pm 0.5$ & $25.6 \pm 0.3$ & $34.9 \pm 0.6$ & $\textbf{24.6} \pm 0.3$ & $25.2 \pm 0.5$  \\ \hline
Citerseer \cite{giles1998citeseer} & (3312,3703,6,104)  & $29.9 \pm 0.3$ & $28.7 \pm 0.3$ & $39.9 \pm 0.5$ & $\textbf{27.9} \pm 0.4$ & $28.0 \pm 0.6$ \\ \hline 
Isolet \cite{Fanty1991} & (7797,617,26,140)  & $5.9 \pm 0.1$ & $5.5 \pm 0.1$ & $19.6 \pm 0.3$ & $6.0 \pm 0.1$ & $\textbf{4.4} \pm 0.2$ \\ \hline
FacePIE \cite{SimBakerBsat2003} & (11554,1024,68,681)  & $5.2 \pm 0.1$ & $5.2 \pm 0.1$ & $45.3 \pm 0.5$ & $3.0 \pm 0.1$ & $\textbf{2.9} \pm 0.1$ \\ \hline
FaceYale 32*32 \cite{Fisherfaces} & (165,1024,15,14)  & $30.5 \pm 1.7$ & $\textbf{18.3} \pm 1.3$ & $54.4 \pm 3.0$ & $24.3 \pm 1.4$ & $23.1 \pm 1.8$  \\ \hline
FaceYale 64*64 \cite{Fisherfaces}& (165,4096,15,15)  & $20.8 \pm 1.5$ & $\textbf{10.7} \pm 1.7$ & $37.2 \pm 4.2$ & $19.3 \pm 1.7$ & $18.6 \pm 1.9$ \\ \hline
Iris \cite{fisher1936use} & (150,4,3,3)  & $\textbf{1.9} \pm 0.3$ & $2.1 \pm 0.2$ & $5.2 \pm 1.1$ & $5.0 \pm 0.7$ & $5.5 \pm 1.6$  \\ \hline
Wine \cite{AeberHard1994} & (178,13,3,4)  & $8.2\pm 1.2$ & $2.4 \pm 0.7$ & $10.3 \pm 1.2$ & $\textbf{2.0} \pm 0.6$ & $2.4 \pm 0.6$  \\ \hline
Wisc Cancer \cite{street1993nuclear} & (699,9,2,6)  & $4.2 \pm 0.2$ & $4.2 \pm 0.1$ & $6.0 \pm 0.5$ & $\textbf{3.3} \pm 0.3$ & $4.9 \pm 0.5$ \\ \hline
Colon \cite{alon1999broad} & (62,2000,2,3)  & $13.0 \pm 1.4$ & $\textbf{12.0} \pm 1.0$ & $27.1 \pm 3.9$ & $18.9 \pm 4.5$ & $21.4 \pm 4.2$ \\ \hline
Golub \cite{golub1999molecular} & (72,7129,3,3)  & $8.7 \pm 2.4$ & $6.6 \pm 1.8$ & $9.9 \pm 2.3$ & $\textbf{6.5} \pm 1.8$ & $9.9 \pm 4.0$  \\ \hline
Credit Card \cite{DalPozzolo2018CreditCardFraud} & (284807,30,2,16)  & $0.06 \pm 0.00$ & $0.06 \pm 0.00$ & $0.08 \pm 0.00$ & $\textbf{0.05} \pm 0.00$ & $0.07 \pm 0.00$ \\ \hline
Bank Marketing \cite{moro2011bankmarketing} & (45211,51,2,471)  & $10.0 \pm 0.0$ & $10.0 \pm 0.0$ & $11.6 \pm 0.1$ & $\textbf{9.5} \pm 0.1$ & $11.9 \pm 0.2$ \\ \hline
Student Success \cite{martins2021early} & (4424,36,3,120)  & $24.1 \pm 0.1$ & $24.1 \pm 0.1$ & $30.3 \pm 0.6$ & $\textbf{23.0} \pm 0.4$ & $30.6 \pm 0.4$ \\ \hline
Adult Income \cite{kohavi-nbtree} & (32561,108,2,483)  & $16.6 \pm 0.1$ & $16.6 \pm 0.1$ & $17.2 \pm 0.1$ & $\textbf{13.5} \pm 0.1$ & $18.8 \pm 0.1$ \\ \hline
German Credit & (1000,24,2,30)  & $23.6 \pm 0.4$ & $\textbf{23.4} \pm 0.4$ & $30.4 \pm 1.2$ & $23.5 \pm 0.6$ & $29.6 \pm 0.8$ \\ \hline
Abalone & (4177,10,2,94)  & $22.4 \pm 0.1$ & $22.4 \pm 0.1$ & $26.8 \pm 0.5$ & $\textbf{21.3} \pm 0.4$ & $24.6 \pm 0.7$ \\ \hline
Wholesale & (440,7,1,17)  & $28.5 \pm 0.2$ & $\textbf{28.4} \pm 0.3$ & $41.6 \pm 2.4$ & $30.4 \pm 0.4$ & $45.4 \pm 1.8$ \\ \hline
Wine White \cite{cortez2009wine} & (4898,11,2,29)  & $12.4 \pm 0.1$ & $12.4 \pm 0.1$ & $12.4 \pm 0.4$ & $\textbf{8.9} \pm 0.1$ & $11.1 \pm 0.6$ \\ \hline
Wine Red \cite{cortez2009wine} & (1599,11,2,135)  & $19.8 \pm 0.1$ & $19.8 \pm 0.1$ & $18.5 \pm 0.5$ & $\textbf{12.0} \pm 0.2$ & $15.4 \pm 0.5$ \\ \hline
\end{tabular}
\caption{Average classification error rates (in percentage) with standard deviations across all real data experiments. For each experiment, the best-performing method is highlighted in bold. The number of leaf regions $m$ are based on a single decision tree on the full data.}
\label{table1}
\end{table*}

\begin{figure}[ht]
	\centering
	\includegraphics[width=0.45\textwidth,trim={0cm 0cm 0cm 0cm},clip]{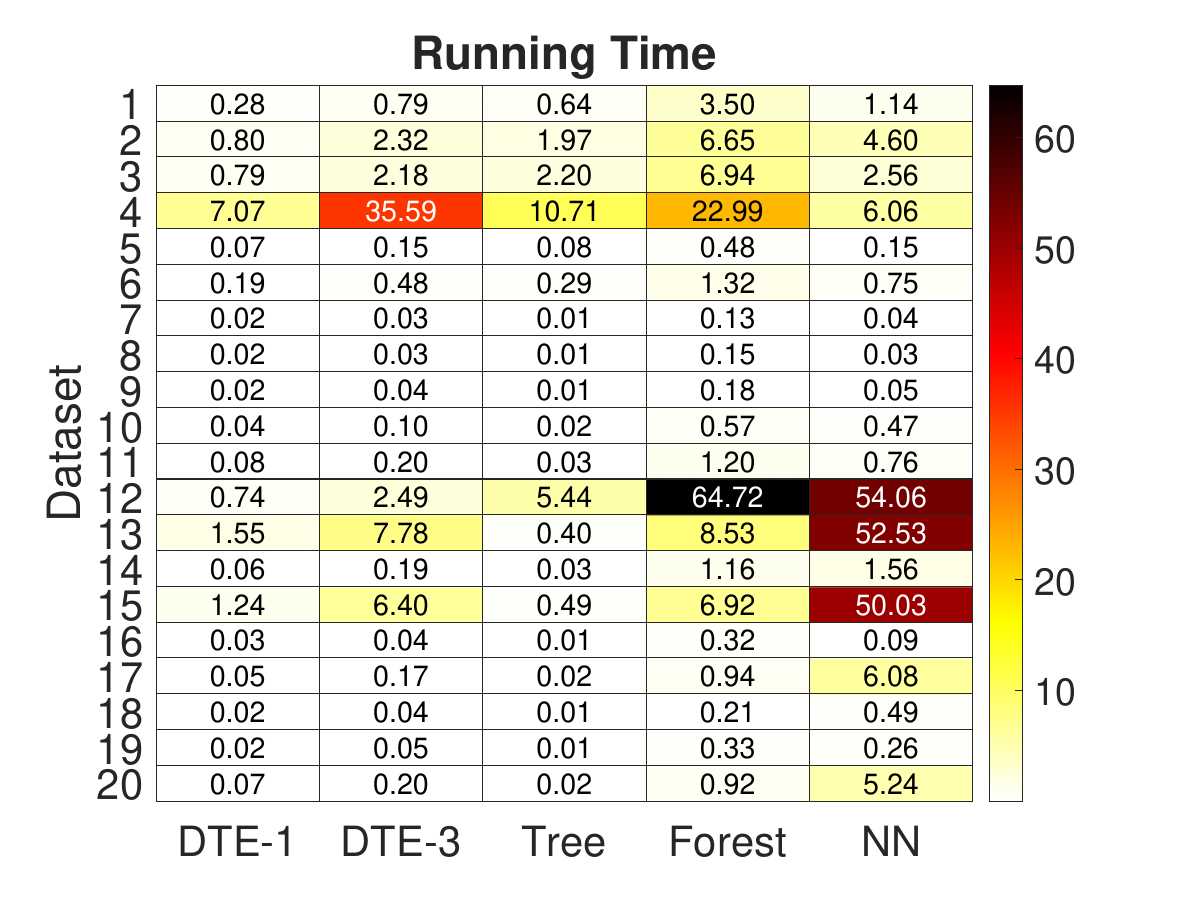}
	\caption{This figures shows the running time, in seconds, for each method and each dataset, including training and testing. We report the average running time for each dataset, averaged over all cross validation runs. 
    }
	\label{fig2}
\end{figure}

\section{Conclusion}

In this paper, we introduce the decision tree embedding, which can be viewed as a combination of neural networks and decision trees, inheriting advantages from both. The method performs a linear transformation and offers scalable inference like a neural network, while its weights are derived from tree-based leaf means, providing strong interpretability and an adaptive, data-driven choice of hidden-layer size without relying on back-propagation. We provide theoretical characterization, simulation-based visualizations illustrating its mechanism, and numerical experiments demonstrating its effectiveness.

There are several promising directions for further development. At present, classification performance may deteriorate when $t$ becomes large, as the resulting increase in dimension $m$ leads to higher estimation variance in LDA. This suggests an opportunity to introduce an additional dimension reduction layer before classification, potentially allowing the method to consistently benefit from larger ensembles, or to employ an alternative classifier that is less susceptible to increases in dimensionality. Another direction is to explore trees built using alternative criteria, such as regression trees, other impurity measures, or structured splitting rules. Finally, because the formulation closely resembles a neural network, the method could be incorporated into more complex architectures or applied to domain-specific tasks.

\bibliographystyle{IEEEtran}
\bibliography{shen,general}

\clearpage
\onecolumn
\setcounter{figure}{0}
\renewcommand{\thealgorithm}{C\arabic{algorithm}}
\renewcommand{\thefigure}{E\arabic{figure}}
\renewcommand{\thesubsection}{\thesection.\arabic{subsection}}
\renewcommand{\thesubsubsection}{\thesubsection.\arabic{subsubsection}}
\pagenumbering{arabic}

\bigskip
\begin{center}
{\large\bf APPENDIX}
\end{center}
\section{Proofs}
\label{sec:proofs}

\thmOne*
\begin{proof}
Let $\{\mathcal{R}_1,\ldots,\mathcal{R}_m\}$ be the leaf partition of $\mathcal{X}$. The map $X \mapsto \mathcal{R}(X) \in \{1,\ldots,m\}$ is measurable with respect to the $\sigma$–field generated by the partition, i.e.
\[
\sigma(\mathcal{R}(X)) = \sigma\big(\{X\in\mathcal{R}_j\}_{j=1}^m\big).
\]
By definition of the decision tree embedding, $Z$ is a measurable function of
$\mathcal{R}(X)$; hence the generated $\sigma$–fields coincide:
\[
\sigma(Z) = \sigma(\mathcal{R}(X)).
\]
Therefore,
\begin{align}
\label{eq1}
P(Y\mid Z) = P(Y\mid \mathcal{R}(X)).
\end{align}
For each $j$, define
\[
P(Y=c\mid X\in\mathcal{R}_j)
:= \frac{P(Y=c,\,X\in\mathcal{R}_j)}{P(X\in\mathcal{R}_j)}.
\]
Bayes $\varepsilon$-homogeneity means that for all $x,x'\in\mathcal{R}_j$,
\[
\|P(Y\mid X=x)-P(Y\mid X=x')\|_1 \le \varepsilon_j,
\]
and define the global bound $\varepsilon = \max_j \varepsilon_j$.

Fix any $x\in\mathcal{R}_j$.  
Since $P(Y\mid\mathcal{R}_j)$ is the average of $P(Y\mid X=x')$ over $x'\in \mathcal{R}_j$, convexity of the $L^1$ norm implies
\[
\|P(Y\mid X=x)-P(Y\mid \mathcal{R}_j)\|_1
\le \varepsilon_j \le \varepsilon.
\]
Using Equation~\ref{eq1} and that $\mathcal{R}(X)=j$ if and only if $X\in \mathcal{R}_j$, we have almost surely
\[
\|P(Y\mid X)-P(Y\mid Z)\|_1
= \|P(Y\mid X)-P(Y\mid \mathcal{R}(X))\|_1
\le \varepsilon.
\]

When $\varepsilon=0$, the conditional distribution $P(Y\mid X=x)$ is constant on each $\mathcal{R}_j$, and thus $P(Y\mid X)=P(Y\mid Z)$ almost surely. Hence $Z$ is a sufficient statistic for $Y$ relative to $X$.
\end{proof}

\thmTwo*
\begin{proof}
Fix any $x\in\mathcal{R}_j$. Its DTE embedding is given by
\[
Z_{j'}(x) = x^\top \mu_{j'} - \|\mu_{j'}\|^2 , \quad j'=1,\ldots,m.
\]

We first show that the $j$th coordinate is the maximal one. Using the identity
\[
x^\top \mu_{j'} - \|\mu_{j'}\|^2 = -\tfrac12 \|x-\mu_{j'}\|^2 + \tfrac12\|x\|^2,
\]
we obtain
\begin{align}
\label{eq2}
Z_j(x) - Z_{j'}(x) = \tfrac12\bigl(\|x-\mu_{j'}\|^2 - \|x-\mu_j\|^2\bigr).
\end{align}

Since $\mu_j$ is the mean of the leaf $\mathcal{R}_j$, all points $x\in\mathcal{R}_j$ satisfy
\begin{align}
\label{eq3}
\|x-\mu_j\|^2 \le \|x-\mu_{j'}\|^2, \quad\forall j'\neq j,
\end{align}
because each leaf of a decision tree corresponds to the region of points split so as to be closest (in squared Euclidean distance) to its own mean. Combining Equation~\ref{eq2} and ~\ref{eq3} yields
\begin{align}
\label{eq4}
Z_j(x) \ge Z_{j'}(x) \quad\forall j'\neq j.
\end{align}

Now consider the indicator classifier
\begin{align}
g(Z)=\arg\max_{c\in[K]} \gamma_c^\top Z, 
\quad (\gamma_c)_j = 1_{\{j\in\mathcal{J}_c\}}.
\end{align}
Since $\gamma_c^\top Z(x)$ selects the maximum among coordinates  belonging to class $c$, and Equation~\ref{eq4} shows that the maximal coordinate of $Z(x)$ is the $j$th one, it follows that
\begin{align*}
g(Z(x)) = c_j,
\end{align*}
where $c_j$ is the majority class in region $\mathcal{R}_j$.

Thus, $g$ predicts a constant label on each leaf region. The misclassification probability is therefore
\begin{align}
\label{eq5}
\mathbb{P}\bigl(g(Z(X))\ne Y\bigr) = \sum_{j=1}^m P(X\in\mathcal{R}_j)\,P(Y\ne c_j\mid X\in\mathcal{R}_j).
\end{align}

By definition of the impurity
\[
l_j 
= 1- \max_{c} P(Y=c\mid X\in\mathcal{R}_j) 
= P(Y\ne c_j\mid X\in\mathcal{R}_j),
\]
and substituting into Equation~\ref{eq5} gives
\[
L_g = \sum_{j=1}^m P(X\in\mathcal{R}_j)\,l_j.
\]

If each leaf is pure, then $l_j=0$ for all $j$, so $L_g=0$, and the DTE embedding is perfectly linearly separable under the indicator classifier.
\end{proof}

\end{document}